\documentclass{article} 
\usepackage[final]{colm2026_conference}

\usepackage{microtype}
\usepackage{hyperref}
\usepackage{url}
\usepackage{booktabs}
\usepackage{multirow}
\usepackage{amsfonts}
\usepackage{amsmath,amssymb,amsthm}
\usepackage{nicefrac}
\usepackage{xcolor}
\usepackage{algorithm}
\usepackage{algorithmic}
\usepackage{graphicx}
\usepackage{subcaption}
\usepackage{tikz}
\usetikzlibrary{arrows.meta, positioning, shapes.geometric}

\usepackage{lineno}

\definecolor{darkblue}{rgb}{0, 0, 0.5}
\hypersetup{colorlinks=true, citecolor=darkblue, linkcolor=darkblue, urlcolor=darkblue}

\newtheorem{proposition}{Proposition}

\title{Se-DPO: Self-Evolving Token Credit for Direct Preference Optimization}

\author{%
Wenxiao Zhao$^{1,2}$ \quad Shu Wang$^{1}$ \quad Ying Nian Wu$^{1}$\\
$^{1}$University of California, Los Angeles\\
$^{2}$Shanghai AI Laboratory%
}

\begin{document}

\ifcolmsubmission
\linenumbers
\fi
\maketitle

\begin{abstract}

Direct Preference Optimization (DPO) aggregates token-level log-probability ratios via uniform summation, implicitly treating all tokens as contributing equally to the preference signal. However, the contribution of individual tokens to the preference signal varies. We introduce token credit, which modulates each token's KL regularization based on its contribution to the preference outcome. We derive that effective token credit is proportional to the magnitude of each token's implicit reward, and observe that this quantity evolves substantially during training. This implies that static token credit becomes increasingly misaligned as training progresses. In this work, we propose Se-DPO (Self-Evolving Token Credit for DPO), a live mechanism that derives token credit from the model's own evolving internal signals during DPO training. Since the reward signal varies in reliability across positions, Se-DPO calibrates token credit based on both the strength and the confidence of each token's contribution. Se-DPO requires no external models, adding only a lightweight calibration network with minimal computational overhead. Experiments show that Se-DPO improves over DPO by up to 9.8 points on AlpacaEval~2 and 12.2 points on Arena-Hard.
\end{abstract}

\section{Introduction}
 
Direct Preference Optimization \citep{dpo} has emerged as a widely adopted method for aligning large language models with human preferences. By reparameterizing the reward function in the reinforcement learning from human feedback (RLHF) objective \citep{christiano2017deep,ziegler2019finetuning,ouyang2022training,stiennon2020learning,bai2022training,touvron2023llama2,schulman2017ppo}, DPO enables direct policy optimization from preference data without training a separate reward model. In the standard formulation, DPO aggregates token-level log-probability ratios via uniform summation, implicitly treating all tokens as equally important to the preference signal.
 
However, not all tokens contribute equally to the quality of a response. A factual error in a single token can flip a preference label \citep{rafailov2024token}, while filler words rarely affect human judgment. Recent methods have sought to address this mismatch by introducing token-level importance signals into the preference optimization objective, drawing on external sources such as pre-trained teacher models \citep{tgdpo,qrm,rto}, contrastive language model pairs \citep{tisdpo}, learned sparse masks \citep{sparsepo}, optimal transport matching \citep{otpo}, per-token KL constraints \citep{tdpo}, prompted self-evaluation \citep{treg}, or oracle-based token selection \citep{sepo}. While these approaches differ in their signal sources, they share a common pattern: token importance is computed prior to or outside the training loop and treated as approximately fixed throughout optimization. A natural alternative is to derive token credit from DPO's own implicit reward \citep{rafailov2024token}, but recent work observed that this signal can assign disproportionate importance to non-critical tokens such as punctuation and line breaks \citep{qrm}, leaving its direct use largely unexplored.
 
In this work, we study the implicit reward not at a single snapshot but across the full trajectory of DPO training. Our analysis reveals that the token-level implicit reward undergoes substantial evolution during training: the set of tokens deemed important shifts considerably between early and late stages, and the importance ranking stabilizes only gradually over the course of optimization. Crucially, this gradual stabilization suggests that the noisy importance estimates observed by prior work~\citep{qrm} reflect the early training state rather than an inherent limitation of the implicit reward. Since token importance continues to shift until convergence, any static credit assignment becomes increasingly stale as optimization progresses. This temporal perspective distinguishes our approach from methods that adapt KL regularization at the sample level \citep{epsilon-dpo,wpo,ivison2024unpacking} or bootstrap implicit rewards at the sequence level \citep{dice}---neither of which captures the within-training evolution of token-level importance.
 
Based on these observations, we show that relaxing DPO's uniform KL coefficient to be token-specific yields an objective where each token's contribution is scaled by a credit term. We derive that, under a variance-minimization criterion on the preference logit, the credit at each position should be proportional to the magnitude of the token's implicit reward. Intuitively, tokens with larger implicit rewards carry more preference-relevant information and should be allowed to deviate further from the reference policy, while tokens with near-zero implicit rewards contribute little to the preference signal and benefit from tighter regularization. Since the implicit reward evolves during training, the credit should also evolve---any static approach uses an increasingly stale signal by design.
 
We propose \textbf{Se-DPO} (Self-Evolving Token Credit for DPO), a live mechanism that periodically extracts implicit reward magnitudes as token credit during DPO training. Se-DPO adds no external models and only a lightweight calibration network with minimal computational overhead.
 
Our contributions are as follows:
\begin{itemize}
    \item We observe that DPO's token-level implicit reward shifts considerably over the course of training, with limited overlap between top-$|\hat r_t|$ tokens at early and late stages.
    \item We introduce token credit to modulate per-token KL regularization strength, and analyze how it can be informed by implicit reward magnitude and estimation confidence.
    \item We propose Se-DPO, which derives token credit online from the model's own evolving internal signals via a lightweight calibration network, requiring no external models and achieving up to 50.6\% win rate on AlpacaEval~2 and 43.3\% on Arena-Hard.
\end{itemize}

\section{Preliminaries}
\label{sec:prelim}
\subsection{DPO and Implicit Reward}

The standard RLHF objective maximizes expected reward subject to a KL divergence constraint against a reference policy $\pi_{\text{ref}}$:
\begin{equation}
    \max_{\pi_\theta} \mathbb{E}_{x \sim \mathcal{D}, y \sim \pi_\theta}\left[r(x, y)\right] - \beta \, D_{\text{KL}}\!\left[\pi_\theta(\cdot|x) \,\|\, \pi_{\text{ref}}(\cdot|x)\right],
    \label{eq:rlhf}
\end{equation}
where $\beta > 0$ controls the strength of regularization. DPO \citep{dpo} reparameterizes the reward in terms of the optimal policy, yielding a supervised loss over preference pairs $(x, y^c, y^r)$:
\begin{equation}
    \mathcal{L}_{\text{DPO}} = -\mathbb{E}_{(x, y^c, y^r)}\!\left[\log\sigma\!\left(\beta\log\frac{\pi_\theta(y^c|x)}{\pi_{\text{ref}}(y^c|x)} - \beta\log\frac{\pi_\theta(y^r|x)}{\pi_{\text{ref}}(y^r|x)}\right)\right].
    \label{eq:dpo}
\end{equation}
Since language models generate tokens autoregressively, the sequence-level log-ratio decomposes as a sum of token-level terms. \citet{rafailov2024token} showed that DPO implicitly learns a token-level reward:
\begin{equation}
    \hat{r}_t = \beta\!\left(\log\pi_\theta(y_t|x, y_{<t}) - \log\pi_{\text{ref}}(y_t|x, y_{<t})\right).
    \label{eq:implicit-reward}
\end{equation}

\begin{figure*}[t]
\centering
\includegraphics[width=\textwidth]{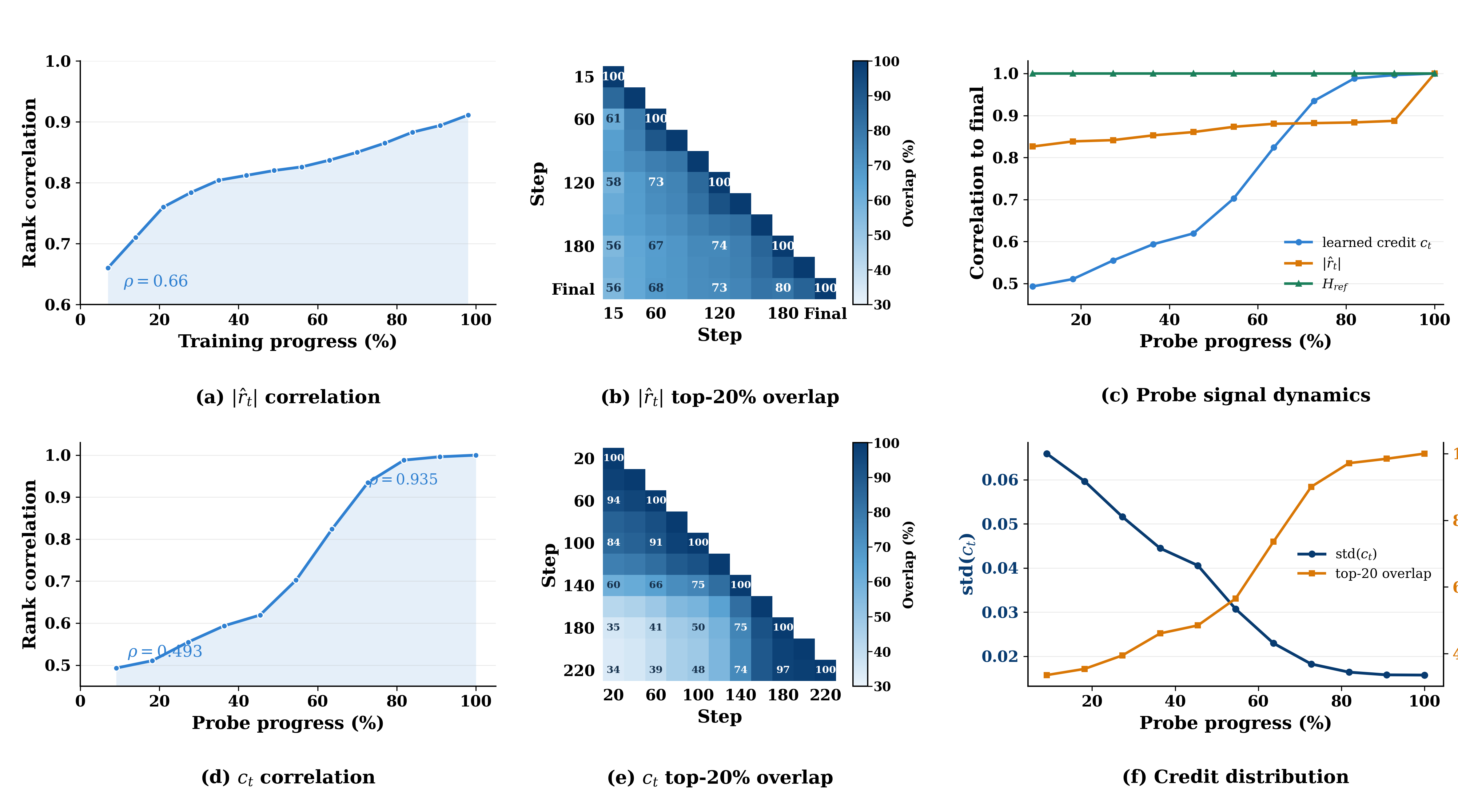}
\caption{
Token importance and learned credit evolve during training.
\textbf{(a,b)} Implicit-reward magnitude $|\hat r_t|$ in the original training run: rank correlation to the final checkpoint increases over time, while top-20\% token overlap remains limited.
\textbf{(c--f)} Learned credit $c_t$ in a separate Se-DPO LoRA credit-dynamics probe: credit correlation increases from 0.493 to 1.0, early-to-final top-20\% overlap is 33.6\%, and the credit distribution gradually stabilizes.
}
\label{fig:credit_dynamics}
\end{figure*}

\subsection{Token-Specific KL Regularization}
\label{sec:token-kl}

Standard DPO applies a uniform KL coefficient $\beta$ to all token 
positions. Prior work has explored token-level KL 
decompositions~\citep{tdpo} and learned masks over token-level KL 
contributions~\citep{sparsepo}. In this work, we consider a 
generalization where the KL strength varies across positions:
\begin{equation}
    \max_\pi \mathbb{E}_{y \sim \pi}\!\left[\sum_{t=1}^{T} r_t 
    - \beta_t \log\frac{\pi(y_t | x, y_{<t})}
    {\pi_{\text{ref}}(y_t | x, y_{<t})}\right],
    \label{eq:token-obj}
\end{equation}
where $\beta_t$ controls the KL penalty at position $t$. At each 
position, the optimization problem is:
\begin{equation}
    \max_{\pi(\cdot|x,y_{<t})} \sum_{y_t} 
    \pi(y_t|x,y_{<t})\!\left[r_t(y_t) 
    - \beta_t \log\frac{\pi(y_t|x,y_{<t})}
    {\pi_{\text{ref}}(y_t|x,y_{<t})}\right].
    \label{eq:per-position-opt}
\end{equation}
This is a KL-regularized optimization whose closed-form solution is:
\begin{equation}
    \pi^*(y_t|x,y_{<t}) = \frac{1}{Z_t}\pi_{\text{ref}}(y_t|x,y_{<t})
    \exp\!\left(\frac{r_t(y_t)}{\beta_t}\right),
    \label{eq:optimal-policy-token}
\end{equation}
where $Z_t = \sum_{y_t'}\pi_{\text{ref}}(y_t'|x,y_{<t})
\exp(r_t(y_t')/\beta_t)$ is a position-specific partition function. 
Rearranging yields the implicit reward:
\begin{equation}
    r_t(y_t) = \beta_t\!\left(\log\pi^*(y_t|x,y_{<t}) 
    - \log\pi_{\text{ref}}(y_t|x,y_{<t})\right) + \beta_t\log Z_t.
    \label{eq:implicit-reward-token}
\end{equation}
Since $Z_t$ depends only on the context $(x, y_{<t})$ and not on the 
sampled token $y_t$, it does not affect the relative contribution of 
different tokens within a response.%
\footnote{With position-dependent $\beta_t$, the per-position partition
functions generally do not cancel between chosen and rejected responses,
even when $\beta_t$ is fixed: the two continuations induce different
contexts after their first divergence, and therefore different $Z_t$ values.
Eq.~\ref{eq:weighted-logit} should thus be understood as a principled
approximation to the token-specific KL objective. The uncancelled term acts
as a context-dependent value offset; because Se-DPO mean-normalizes credits
within each response and keeps them bounded, this residual cannot grow by
uniformly scaling token-credit values. Similar approximations are also used by
other token-level objectives that modify the per-token KL structure.}
Define $c_t = \beta / \beta_t$ and recall the token-level implicit 
reward $\hat{r}_t^{(s)} = \beta(\log\pi_\theta(y_t^{(s)}|x, 
y_{<t}^{(s)}) - \log\pi_{\text{ref}}(y_t^{(s)}|x, y_{<t}^{(s)}))$ 
from Eq.~\ref{eq:implicit-reward}. The resulting preference logit is:
\begin{equation}
    \hat{\Delta} = \sum_{t=1}^{T_c} c_t^{(c)}\, \hat{r}_t^{(c)} 
    - \sum_{t=1}^{T_r} c_t^{(r)}\, \hat{r}_t^{(r)},
    \label{eq:weighted-logit}
\end{equation}
where standard DPO is recovered when $\beta_t = \beta$ (i.e., $c_t = 1$) 
for all $t$. A smaller $\beta_t$ at position $t$ yields a larger credit 
$c_t$, reducing the KL penalty and allowing the policy to deviate further 
from the reference at that position. The central question is how to set 
$\beta_t$---equivalently, how to allocate per-token KL budgets.
\begin{figure*}[t]
\centering
\begin{minipage}[b]{0.32\linewidth}
    \centering
    \includegraphics[width=\linewidth]{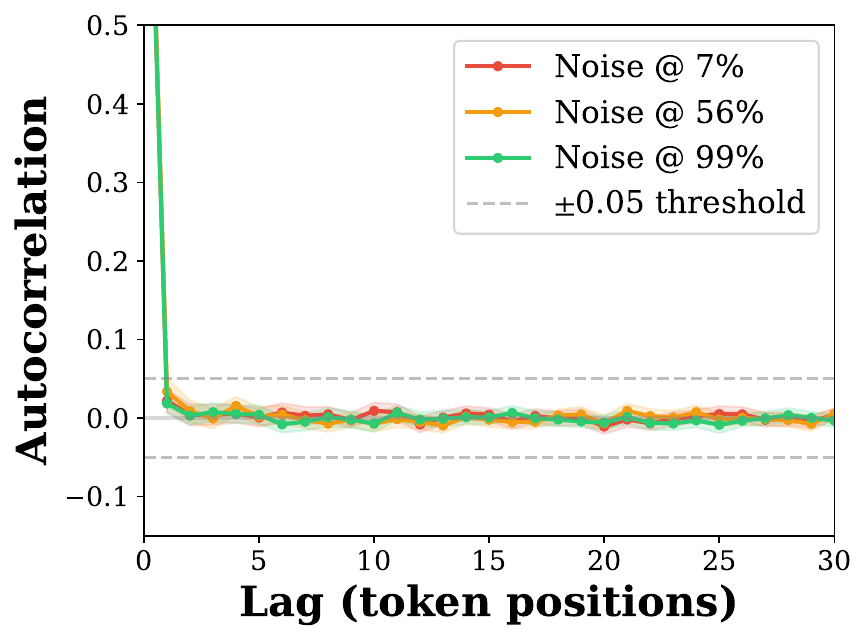}
    \subcaption{Noise autocorrelation decay.}
    \label{fig:val_autocorr}
\end{minipage}
\hfill
\begin{minipage}[b]{0.32\linewidth}
    \centering
    \includegraphics[width=\linewidth]{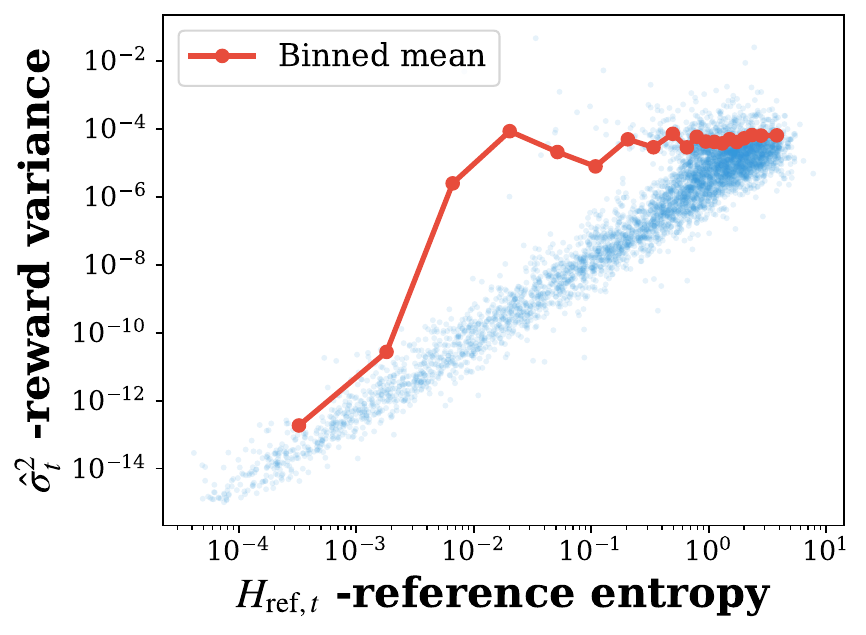}
    \subcaption{$H_{\mathrm{ref},t}$ vs.\ $\hat{\sigma}_t^2$.}
    \label{fig:val_scatter}
\end{minipage}
\hfill
\begin{minipage}[b]{0.32\linewidth}
    \centering
    \includegraphics[width=\linewidth]{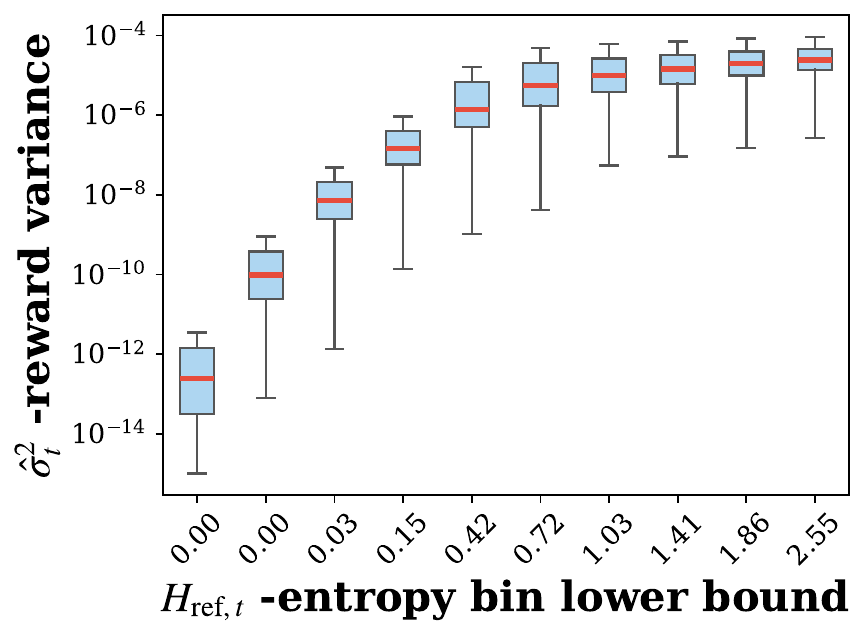}
    \subcaption{Variance by entropy decile.}
    \label{fig:val_boxplot}
\end{minipage}

\caption{Empirical validation of Proposition~\ref{prop:optimal-weight} assumptions.
\textbf{(a)} Noise autocorrelation of $\hat{r}_t$ drops below 0.05 within lag~1 
at all training stages ($m/T < 0.004$), confirming near-independence across positions.
\textbf{(b)} Log-log scatter of reference entropy vs.\ empirical reward variance 
(Spearman $\rho = 0.86$); binned means (red) show a clear monotonic trend.
\textbf{(c)} Boxplot of $\hat{\sigma}_t^2$ by entropy decile confirms that 
higher-entropy positions produce less reliable implicit rewards.}
\label{fig:validation}
\end{figure*}

\section{Method}
\label{sec:method}
 
\subsection{Token Credit Dynamics During Training}
\label{sec:motivation}

Existing token-level methods compute importance prior to or outside 
the training loop, treating it as approximately fixed. However, it 
remains unclear how token importance behaves as training progresses. 
To investigate, we track the implicit reward 
(Eq.~\ref{eq:implicit-reward}) across the trajectory of standard 
DPO training (details in Appendix~\ref{app:details}).

\paragraph{Token importance evolves continuously during training.}
Figure~\ref{fig:credit_dynamics} provides empirical motivation for our self-evolving credit design. We first revisit the implicit reward magnitude $|\hat{r}_t|$, which reflects token-level preference signals. As shown in Figure~\ref{fig:credit_dynamics}(a,b), top-$|\hat r_t|$ tokens change continuously during the original training run: their rank correlation with the final checkpoint increases over time, while the early-to-final top-20\% overlap remains limited.

To verify that this phenomenon also appears in the actual credit used by Se-DPO, we run a separate Se-DPO LoRA credit-dynamics probe on fixed evaluation samples. As shown in Figure~\ref{fig:credit_dynamics}(c--f), the learned credit $c_t$ used for per-token KL-budget allocation is also dynamic. Its correlation with the final checkpoint starts from 0.493 and gradually reaches 1.0, while the early-to-final top-20\% overlap is only 33.6\%. In contrast, the reference entropy $H_{\mathrm{ref}}$ remains static. These observations suggest that token credit is shaped by the evolving model state, motivating our online credit update rather than a static per-token KL-budget allocation. We formalize this intuition in Section~\ref{sec:signals} and propose the complete Se-DPO algorithm in Section~\ref{sec:Se-DPO}.
 
\begin{algorithm}[t]
\caption{Se-DPO: Self-Evolving Token Credit Direct Preference Optimization}
\label{alg:Se-DPO}
\begin{algorithmic}[1]
\REQUIRE Data $\mathcal{D}$, policy $\pi_\theta$, reference $\pi_{\text{ref}}$, calibration network $f_\phi$, warmup steps $W$
\FOR{each training step with batch $(x, y^c, y^r) \sim \mathcal{D}$}
    \STATE \textit{// Token-level implicit rewards}
    \FOR{$s \in \{c, r\}$, $t = 1, \ldots, T_s$}
        \STATE $\hat{r}_t^{(s)} \leftarrow \beta\!\left(\log\pi_\theta(y_t^{(s)} \mid x, y_{<t}^{(s)}) - \log\pi_{\text{ref}}(y_t^{(s)} \mid x, y_{<t}^{(s)})\right)$
    \ENDFOR
    \STATE \textit{// Token credit assignment}
    \IF{step $\leq W$}
        \STATE $c_t^{(s)} \leftarrow 1$ for all $t, s$
    \ELSE
        \STATE $H_{\text{ref},t}^{(s)} \leftarrow -\sum_{v} \pi_{\text{ref}}(v \mid x, y_{<t}^{(s)}) \log \pi_{\text{ref}}(v \mid x, y_{<t}^{(s)})$ 
        \STATE $\tilde{c}_t^{(s)} \leftarrow f_\phi\!\left(|\hat{r}_t^{(s)}|,\; H_{\text{ref},t}^{(s)}\right)$ 
        \STATE $c_t^{(s)} \leftarrow \tilde{c}_t^{(s)} \,/\, \text{mean}_{t'}(\tilde{c}_{t'}^{(s)})$ 
    \ENDIF
    \STATE \textit{// Token-specific KL-regularized preference loss}
    \STATE $\mathcal{L} \leftarrow -\log\sigma\!\left(\sum_{t} c_t^{(c)}\, \hat{r}_t^{(c)} - \sum_{t} c_t^{(r)}\, \hat{r}_t^{(r)}\right)$
    \STATE Update $\theta$ and $\phi$ via $\nabla_{\theta, \phi} \mathcal{L}$
\ENDFOR
\end{algorithmic}
\end{algorithm}
 
\subsection{Two Complementary Signals for Token Credit}
\label{sec:signals}
 
The analysis above establishes that token credit should be extracted online. We now ask: what signals should inform the credit assignment?
 
The implicit reward $\hat{r}_t$ (Eq.~\ref{eq:implicit-reward}) provides a natural candidate. However, a high $|\hat{r}_t|$ can arise from multiple causes: the model may have learned a meaningful preference at that position, or the reference model may simply be uncertain there, making the log-ratio noisy. Using $|\hat{r}_t|$ alone conflates signal and noise.
 
We observe that the reference model's token-level entropy provides a complementary signal:
\begin{equation}
    H_{\text{ref},t} = -\sum_{v \in \mathcal{V}} \pi_{\text{ref}}(v \mid x, y_{<t}) \log \pi_{\text{ref}}(v \mid x, y_{<t}),
    \label{eq:ref-entropy}
\end{equation}
where $\mathcal{V}$ is the vocabulary. At positions where $H_{\text{ref},t}$ is high, the reference model is uncertain, and the implicit reward is inherently noisier; at positions where $H_{\text{ref},t}$ is low, the reference model is confident, and deviations from it carry stronger signal.
 
This decomposition connects to the following result on optimal credit:
\begin{proposition}
\label{prop:optimal-weight}
Consider the preference logit $\hat{\Delta} = \sum_t c_t \hat{r}_t$, where each $\hat{r}_t = r_t^* + \epsilon_t$ is a noisy estimate of the true token reward with $\mathrm{Var}[\epsilon_t] = \sigma_t^2$. Since $c_t = \beta/\beta_t > 0$ by construction, we require $c_t \geq 0$. Assume the noise is $m$-dependent: $\mathrm{Cov}[\epsilon_t, \epsilon_{t'}] = 0$ for $|t - t'| > m$.

\textbf{(a)} Under independent noise ($m = 0$), the non-negative credits that minimize $\mathrm{Var}[\hat{\Delta}]$ subject to $\sum_t c_t |r_t^*| = C$ satisfy:
\begin{equation}
    c_t^* \;\propto\; \frac{|r_t^*|}{\sigma_t^2}.
    \label{eq:optimal-credit}
\end{equation}

\textbf{(b)} For $m$-dependent noise with bounded credits $c_t \in [c_{\min}, c_{\max}]$, the cross-covariance contribution to $\mathrm{Var}[\hat{\Delta}]$ is at most $O(m)$ times the diagonal term $\sum_t c_t^2 \sigma_t^2$. The solution from \textbf{(a)} thus remains a valid first-order approximation when $m \ll T$.
\end{proposition}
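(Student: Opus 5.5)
For part (a), I will treat it as a convex quadratic program. With $m=0$ the noise terms are uncorrelated, and the $r_t^*$ are deterministic, so $\mathrm{Var}[\hat{\Delta}] = \sum_t c_t^2 \sigma_t^2$. The task is then to minimize this strictly convex objective (assuming every $\sigma_t^2>0$) over the affine set $\sum_t c_t |r_t^*| = C$, intersected with the orthant $c_t \ge 0$.

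I will write the Lagrangian $\sum_t c_t^2\sigma_t^2 - \lambda(\sum_t c_t|r_t^*| - C) - \sum_t \mu_t c_t$ and apply the KKT conditions. Stationarity gives $2c_t\sigma_t^2 = \lambda |r_t^*| + \mu_t$. I will first solve the problem without the sign constraints, which gives $c_t = \lambda|r_t^*|/(2\sigma_t^2)$. Because $|r_t^*|\ge 0$ and $C>0$, the multiplier satisfies $\lambda>0$, so this candidate is already non-negative. It is therefore feasible, and setting all $\mu_t=0$ satisfies complementary slackness.

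Convexity makes the KKT point the global minimizer, so $c_t^*\propto |r_t^*|/\sigma_t^2$. The constant is fixed by the constraint: $\lambda = 2C/\sum_t (r_t^*)^2/\sigma_t^2$. Positions with $r_t^*=0$ receive zero credit. This is consistent with the statement's use of $c_t\ge 0$ in place of strict positivity.

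As an alternative, the same conclusion follows from Cauchy--Schwarz:
\[
C^2 = \Bigl(\sum_t c_t\sigma_t\cdot |r_t^*|/\sigma_t\Bigr)^2 \le \Bigl(\sum_t c_t^2\sigma_t^2\Bigr)\Bigl(\sum_t (r_t^*)^2/\sigma_t^2\Bigr),
\]
with equality exactly when $c_t\sigma_t\propto |r_t^*|/\sigma_t$. I will probably present this version because it is shorter.

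For part (b), I will expand
\[
\mathrm{Var}[\hat{\Delta}] = \sum_t c_t^2\sigma_t^2 + \sum_{0<|t-t'|\le m} c_t c_{t'}\mathrm{Cov}[\epsilon_t,\epsilon_{t'}].
\]
Each cross term I will bound by $|\mathrm{Cov}|\le \sigma_t\sigma_{t'}$ followed by AM--GM:
\[
c_tc_{t'}\sigma_t\sigma_{t'} \le \tfrac12\bigl(c_t^2\sigma_t^2 + c_{t'}^2\sigma_{t'}^2\bigr).
\]
Each index $t$ appears in at most $2m$ ordered pairs, so the off-diagonal sum is at most $2m\sum_t c_t^2\sigma_t^2$. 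That is the claimed $O(m)$ factor.

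In fact this step does not need the bounded-credit hypothesis. The box $[c_{\min},c_{\max}]$ matters only when one wants to compare the bound uniformly across feasible credit vectors.

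The step I expect to be the main obstacle is converting this bound into the claim that (a) remains a "valid first-order approximation when $m\ll T$", which needs to be made precise. My first attempt will read it as saying the relative perturbation of the objective is controlled. That requires a per-pair covariance that is small, $|\mathrm{Cov}[\epsilon_t,\epsilon_{t'}]|\le\rho\,\sigma_t\sigma_{t'}$ with $m\rho\ll 1$, rather than the crude bound $\rho\le 1$. I may have to add this as an interpretive assumption, citing the empirical autocorrelation below $0.05$ in Figure~\ref{fig:validation}(a).

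Given $\rho$, the objective is sandwiched between $(1-2m\rho)$ and $(1+2m\rho)$ times the diagonal objective. Hence the part-(a) solution is within a multiplicative factor $(1+2m\rho)/(1-2m\rho)$ of the true optimum. I will state this as the precise sense in which (a) is a first-order approximation. I will also note that the bounded box is what allows the normalization in Algorithm~\ref{alg:Se-DPO} to keep the comparison uniform.
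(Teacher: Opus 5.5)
Your proposal is correct. Part (a) is the paper's Lagrangian argument, plus the convexity/KKT justification (or the Cauchy--Schwarz alternative) that the paper leaves implicit.

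Part (b) starts from the same diagonal/cross decomposition but bounds the cross term differently. The paper first pulls out $c_tc_{t'}\le c_{\max}^2$ and applies AM--GM only to $\sigma_t\sigma_{t'}$, giving $|V_{\text{cross}}|\le 2m\,c_{\max}^2\rho_{\max}\sum_t\sigma_t^2$. It then uses $V_{\text{diag}}\ge c_{\min}^2\sum_t\sigma_t^2$ to get back to the diagonal term. That is where the box hypothesis enters, and it costs a factor $(c_{\max}/c_{\min})^2$.

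You apply AM--GM to $(c_t\sigma_t)(c_{t'}\sigma_{t'})$ directly, which gives $|V_{\text{cross}}|\le 2m\rho\,V_{\text{diag}}$ for every credit vector. This bound is sharper and makes the bounded-credit assumption unnecessary. That matters, because mean normalization fixes the average credit but does not control $c_{\max}/c_{\min}$.

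Your sandwich step then turns the paper's informal ``first-order approximation'' into an explicit suboptimality factor $(1+2m\rho)/(1-2m\rho)=1+4m\rho+O(m^2\rho^2)$. You are also right that the operative condition is $m\rho\ll 1$; neither bound involves $T$.

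Since your bound already holds uniformly over all credits, drop the closing remark that the box is needed ``to keep the comparison uniform.'' In fact, imposing $c_t\ge c_{\min}>0$ would exclude the part-(a) solution whenever some $r_t^*=0$, so working without the box is the cleaner choice.
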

 
\begin{proof}[Proof sketch]
For Part~(a), we minimize $\mathrm{Var}[\hat{\Delta}] = \sum_t c_t^2 \sigma_t^2$ subject to the signal-preservation constraint $\sum_t c_t |r_t^*| = C$ with $c_t \geq 0$. The Lagrangian first-order condition $\partial \mathcal{L}/\partial c_t = 2c_t\sigma_t^2 - \lambda|r_t^*| = 0$ directly yields $c_t^* = \lambda|r_t^*|/(2\sigma_t^2) \propto |r_t^*|/\sigma_t^2$, which is automatically non-negative. For Part~(b), with bounded credits and $m$-dependent noise, the cross-covariance contribution to $\mathrm{Var}[\hat{\Delta}]$ scales as $O(m \cdot c_{\max}^2 \rho_{\max})$ relative to the diagonal term, remaining a bounded perturbation when $m \ll T$. The full proof is in Appendix~\ref{app:proof}.
\end{proof}

Part~(b) ensures that the conclusion is robust to the local correlations inherent in autoregressive generation, provided the credit values remain bounded---a property guaranteed by the mean normalization. We verify empirically that the noise autocorrelation drops below 0.05 within lag~1, yielding $m/T < 0.004$ (Figure~\ref{fig:validation}a), confirming that the independent-noise solution from Part~(a) is itself a close approximation.

We use $|\hat{r}_t|$ as a proxy for $|r_t^*|$. For $\sigma_t^2$, we observe that reference entropy $H_{\text{ref},t}$ provides an informative proxy: at high-entropy positions the reference model distributes probability mass across many tokens, so the log-ratio $\log\pi_\theta / \pi_{\text{ref}}$ is inherently more sensitive to small changes in $\pi_\theta$, increasing the estimation noise of $\hat{r}_t$. Empirically, we estimate $\sigma_t^2$ as the variance of $\hat{r}_t$ across the latter half of training checkpoints and find Spearman $\rho = 0.86$ between $H_{\text{ref},t}$ and $\hat{\sigma}_t^2$ (Figure~\ref{fig:validation}b--c), confirming this relationship. However, a binned analysis controlling for $|\hat{r}_t|$ reveals that the correlation is strongest at low-reward positions and diminishes at high-reward positions, indicating a nonlinear interaction between the two signals. This motivates a calibration network that learns the mapping jointly (Section~\ref{sec:credit-fn}), rather than fixing the functional form $c_t \propto |\hat{r}_t| / H_{\text{ref},t}$.

\paragraph{On potential-based shaping.}
The token-level implicit reward is not unique under arbitrary potential-based
shaping~\citep{rafailov2024token}. Se-DPO does not claim invariance to all such
decompositions. Instead, it operates under the canonical token decomposition
induced by the DPO log-ratio in Eq.~\ref{eq:implicit-reward}, which is
reproducible for a fixed policy--reference pair and provides a practical credit
signal. Our ablations in Section~\ref{sec:ablations} support this convention
empirically: using $|\hat{r}_t|$ alone already improves over uniform DPO, while
the large gap between Static-Credit and Se-DPO shows that tracking the evolution
of this canonical signal is more important than merely using a fixed token-level
decomposition.

\subsection{Credit Calibration Network}
\label{sec:credit-fn}
 
Proposition~\ref{prop:optimal-weight} suggests credit of the form $c_t \propto |\hat{r}_t| / H_{\text{ref},t}$, but the proxy relationships $|r_t^*| \approx |\hat{r}_t|$ and $\sigma_t^2 \approx H_{\text{ref},t}$ are approximate and interact nonlinearly (Section~\ref{sec:signals}). Moreover, the closed-form ratio is numerically unstable at low-entropy positions where $\sigma_t^2 \to 0$, producing arbitrarily large credits for tokens with negligible reward. To account for these issues, we parameterize the credit function 
as a lightweight MLP $f_\phi$ that maps the two signals to a 
scalar credit:
\begin{equation}
    \tilde{c}_t^{(s)} = f_\phi\!\left(|\hat{r}_t^{(s)}|,\; 
    H_{\text{ref},t}^{(s)}\right), \quad s \in \{c, r\},
    \label{eq:credit-fn}
\end{equation}
where $\phi$ is shared across all token positions and both 
response sides (architecture details in 
Appendix~\ref{app:details}).
 
Without constraint, the credit function could degenerately shrink 
all credits toward zero, which would trivially reduce the DPO 
loss. To prevent this, we normalize credits to have unit mean 
within each response, matching the total credit of standard DPO 
while allowing redistribution across tokens.
 
\subsection{Se-DPO: Complete Algorithm}
\label{sec:Se-DPO}
We now describe the complete Se-DPO algorithm, which integrates the components 
above. During the first $W$ steps (the \emph{warmup phase}), Se-DPO uses standard 
DPO with uniform credit ($c_t = 1$), avoiding amplification of early noise when 
$|\hat{r}_t| \approx 0$ for all tokens. From step 
$W{+}1$ onward (the \emph{guided phase}), at each training step Se-DPO computes 
$|\hat{r}_t^{(s)}|$ and $H_{\text{ref},t}^{(s)}$ from the current forward pass 
through the policy and reference models. Since DPO already evaluates both models on each batch, these quantities can be derived from the existing forward pass with minimal overhead. The calibration network $f_\phi$ then maps $|\hat{r}_t^{(s)}|$ and $H_{\text{ref},t}^{(s)}$ to token credit via  Eqs.~\ref{eq:credit-fn}.
 
\paragraph{Loss.} The token credit modulates each token's contribution to the preference loss:
\begin{equation}
    \mathcal{L}_{\text{Se-DPO}} = -\log\sigma\!\left(\sum_{t} c_t^{(c)}\, \hat{r}_t^{(c)} - \sum_{t} c_t^{(r)}\, \hat{r}_t^{(r)}\right),
    \label{eq:Se-DPO-loss}
\end{equation}

Se-DPO adds only the forward and backward pass of the calibration 
network per training step. Both input signals---$|\hat{r}_t|$ and 
$H_{\text{ref},t}$---are derived from the standard DPO forward 
pass, requiring no additional model evaluations.

\begin{table}[t]
\centering
\caption{Experiment results on AlpacaEval~2 \citep{alpacaeval}, Arena-Hard \citep{arenahard}, and MT-Bench benchmarks. Best results in each setting are \textbf{bolded}.}
\label{tab:main}
\scriptsize
\resizebox{\textwidth}{!}{
\begin{tabular}{lcccccccc}
\toprule
 & \multicolumn{4}{c}{\textbf{Llama-3-8B-Instruct (PairRM)}} & \multicolumn{4}{c}{\textbf{Llama-3-8B-Instruct (ArmoRM)}} \\
\cmidrule(lr){2-5} \cmidrule(lr){6-9}
\multirow{2}{*}{\textbf{Method}} & \textbf{AlpacaEval 2} & \textbf{Arena-Hard} & \multicolumn{2}{c}{\textbf{MT-Bench}} & \textbf{AlpacaEval 2} & \textbf{Arena-Hard} & \multicolumn{2}{c}{\textbf{MT-Bench}} \\
\cmidrule(lr){2-2} \cmidrule(lr){3-3} \cmidrule(lr){4-5} \cmidrule(lr){6-6} \cmidrule(lr){7-7} \cmidrule(lr){8-9}
 & \textbf{WR (\%)} & \textbf{WR (\%)} & \textbf{Score} & \textbf{WR (\%)} & \textbf{WR (\%)} & \textbf{WR (\%)} & \textbf{Score} & \textbf{WR (\%)} \\
\midrule
SFT& 30.6 & 21.4 & 7.9 & 27.5 & 30.6 & 21.4 & 7.9 & 27.5 \\
DPO& 41.7 & 30.4 & 8.0 & 37.5 & 40.8 & 36.2 & \textbf{8.2} & \textbf{46.3} \\
SimPO& 39.8 & 28.7 & 7.8 & 32.5 & 37.0 & 28.1 & 7.8 & 42.5 \\
Static-Credit& 42.3 & 31.2 & 8.0 & 37.4 & 41.2 & 38.3 & 8.1 & 45.6 \\
TGDPO& 43.9 & 34.3 & 8.0 & \textbf{41.9} & 42.5 & 40.5 & 7.9 & 45.0 \\
Se-DPO & \textbf{47.2} & \textbf{42.6} & 7.4 & \textbf{41.9} & \textbf{50.6} & \textbf{43.3} & 6.9 & 40.0 \\
\midrule
 & \multicolumn{4}{c}{\textbf{Llama-3.2-3B-Instruct (ArmoRM)}} & \multicolumn{4}{c}{\textbf{Gemma-2-2B-it (ArmoRM)}} \\
\cmidrule(lr){2-5} \cmidrule(lr){6-9}
\multirow{2}{*}{\textbf{Method}} & \textbf{AlpacaEval 2} & \textbf{Arena-Hard} & \multicolumn{2}{c}{\textbf{MT-Bench}} & \textbf{AlpacaEval 2} & \textbf{Arena-Hard} & \multicolumn{2}{c}{\textbf{MT-Bench}} \\
\cmidrule(lr){2-2} \cmidrule(lr){3-3} \cmidrule(lr){4-5} \cmidrule(lr){6-6} \cmidrule(lr){7-7} \cmidrule(lr){8-9}
 & \textbf{WR (\%)} & \textbf{WR (\%)} & \textbf{Score} & \textbf{WR (\%)} & \textbf{WR (\%)} & \textbf{WR (\%)} & \textbf{Score} & \textbf{WR (\%)} \\
\midrule
SFT& 23.8 & 17.1 & 7.0 & 16.3 & 32.8 & 20.1 & 7.9 & 37.5 \\
DPO& 29.6 & 23.2 & 7.9 & 29.4 & 40.8 & 26.4 & 8.0 & 43.1 \\
SimPO& 26.2 & 22.6 & 7.4 & 15.7 & 34.8 & 21.1 & 7.8 & 40.0 \\
Static-Credit& 30.1 & 23.8 & 7.8 & 28.8 & 41.2 & 27.3 & 7.9 & 42.5 \\
TGDPO& \textbf{35.8} & 25.4 & 8.1 & 36.9 & 43.0 & \textbf{30.7} & \textbf{8.1} & \textbf{46.9} \\
Se-DPO(LoRA) & 34.2 & \textbf{28.8} & \textbf{8.3} & \textbf{42.5} & \textbf{44.4} & 29.6 & 7.9 & 42.3 \\
\bottomrule
\end{tabular}
}
\end{table}

\section{Experiments}
\label{sec:experiments}
\subsection{Setup}
\label{sec:setup}

\paragraph{Models and data.} We conduct experiments on three models: Llama-3.2-3B-Instruct, Llama-3-8B-Instruct \citep{llama3}, and Gemma-2-2B-it \citep{gemma2}. Following \citet{tgdpo} and \citet{simpo}, we use prompts from the UltraFeedback dataset \citep{ultrafeedback} and let each model generate 5 responses with a temperature of 0.8. These responses are then ranked using ArmoRM \citep{armorm}, with the highest and lowest-ranked responses selected as the chosen and rejected samples, respectively. For Llama-3-8B-Instruct, we additionally use PairRM \citep{pairrm} for preference annotation to evaluate robustness across different annotators. The data construction protocol is summarized in Appendix Table~\ref{tab:data_protocol}. Each dataset contains approximately 60K preference pairs. We compare against DPO \citep{dpo}, SimPO \citep{simpo}, TGDPO \citep{tgdpo}, and
Static-Credit, using the original method-specific settings whenever applicable. The SFT row refers to the original instruction-tuned model evaluated directly, without additional supervised fine-tuning. All methods are trained for 1 epoch with the AdamW optimizer and a cosine learning rate schedule with 10\% warmup. Hyperparameter settings are presented in Appendix~\ref{app:details}.

\paragraph{Evaluation benchmarks.} We use three prominent instruction-following benchmarks for evaluation: AlpacaEval~2 \citep{alpacaeval}, Arena-Hard \citep{arenahard}, and MT-Bench. For AlpacaEval~2, we report the win rate against GPT-4 Turbo. For Arena-Hard, we report the win rate against GPT-4-0314. For MT-Bench, we report the score and win rate against GPT-4. 

\begin{table}[t]
\centering
\caption{Warmup sensitivity on Llama-3.2-3B-Instruct with ArmoRM annotation.}
\label{tab:warmup_3b}
\small
\begin{tabular*}{\textwidth}{@{\extracolsep{\fill}}lccc}
\toprule
Variant & AlpacaEval~2 WR (\%) & AlpacaEval~2 LC WR (\%) & Arena-Hard WR (\%) \\
\midrule
No warmup & 32.57 & 33.03 & 22.1 \\
20\% warmup & 31.15 & 31.50 & 22.8 \\
50\% warmup & 32.08 & 32.53 & 25.2 \\
4\% warmup (default) & 34.20 & 32.60 & 28.8 \\
\bottomrule
\end{tabular*}
\end{table}

\subsection{Main Results}

Table~\ref{tab:main} summarizes the results across three base models and two preference annotators. Se-DPO achieves the strongest or near-strongest performance on AlpacaEval~2 and Arena-Hard across most settings. On Llama-3-8B-Instruct with ArmoRM annotation, Se-DPO achieves 50.6\% win rate on AlpacaEval~2 and 43.3\% on Arena-Hard, surpassing the strongest prior baseline in Table~\ref{tab:main} by +8.1 and +2.8 points; with PairRM annotation, Se-DPO similarly leads with 47.2\% and 42.6\%, improving over the strongest prior baseline in Table~\ref{tab:main} by +3.3 and +8.3 points. Notably, these gains are achieved without any external model. Se-DPO also shows consistent improvements on AlpacaEval~2 and Arena-Hard under both PairRM and ArmoRM annotations, suggesting that the self-evolving credit mechanism is robust to the choice of preference annotator, as it derives importance signals entirely from the model's own internal states.

On smaller models (Llama-3.2-3B-Instruct and Gemma-2-2B-it), Se-DPO is trained with LoRA in the main table. Se-DPO (LoRA) consistently improves over standard DPO on AlpacaEval~2 and Arena-Hard, while remaining competitive with TGDPO across smaller-model settings. The margins are narrower than in the Llama-3-8B full fine-tuning setting, which suggests that parameter-efficient training may limit how precisely the policy can realize fine-grained, position-dependent deviations from the reference policy.

\subsection{Ablation Studies}
\label{sec:ablations}

We conduct component ablations on Llama-3.2-3B-Instruct with ArmoRM annotation to isolate the contribution of online updating, the two input signals, warmup, and learned calibration (Figure~\ref{fig:ablation}). Static-Credit computes token credit once at the end of warmup and then keeps it frozen. It improves only modestly over DPO, confirming that a fixed per-token credit signal is useful but insufficient. In contrast, Se-DPO substantially outperforms Static-Credit, directly supporting our central claim that token credit should evolve with the policy during training.


\begin{figure*}[t]
\centering
\includegraphics[width=\textwidth]{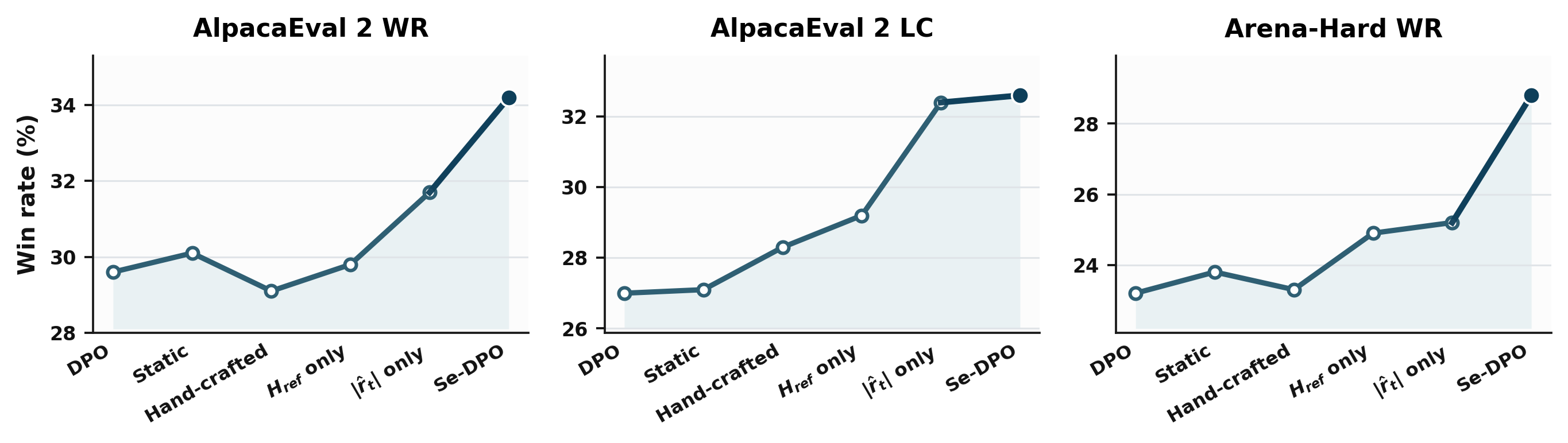}
\caption{Ablation results on Llama-3.2-3B-Instruct. We compare DPO, Static-Credit, hand-crafted credit, single-signal variants, and Se-DPO across AlpacaEval~2 raw win rate, AlpacaEval~2 length-controlled win rate, and Arena-Hard win rate.}
\label{fig:ablation}
\end{figure*}

The input ablations show that both implicit reward magnitude and reference entropy carry useful information. The hand-crafted ratio does not consistently improve over the single-signal variants, indicating that a fixed functional form is insufficient for capturing the relationship between reward strength and noise. This is consistent with our analysis in Section~\ref{sec:signals}: reference entropy is most informative at low-reward positions, while high-reward positions require a more regime-dependent mapping. We also find that high-credit tokens at convergence have near-zero rank correlation with the variance of $\hat{r}_t$ across the final checkpoints, suggesting that the calibration network does not simply amplify noisy tokens.

\paragraph{Warmup sensitivity.}
The Se-DPO credit warmup $W$ is independent of the optimizer learning-rate warmup. During the first $W$ steps, training uses the standard DPO loss and the calibration network is inactive; after $W$ steps, credit-based KL allocation is enabled and $f_\phi$ is updated jointly with the policy. Table~\ref{tab:warmup_3b} shows that the default short warmup performs best on AlpacaEval~2 raw win rate and Arena-Hard, while remaining comparable on length-controlled AlpacaEval~2. Removing warmup or delaying credit activation reduces performance on these primary win-rate benchmarks, suggesting that Se-DPO benefits from a brief stabilization phase before online credit updates begin, while overly long warmups reduce the benefit of online credit adaptation.



Because LLM-as-a-judge benchmarks can be sensitive to response length, we report length-controlled AlpacaEval~2 scores and mean generation lengths in Appendix Table~\ref{tab:length}. On Llama-3-8B-Instruct, Se-DPO's average length is higher than DPO, but it retains a large length-controlled advantage over DPO and TGDPO, indicating that the gains are not explained by verbosity alone. We also observe that Se-DPO's MT-Bench score is lower than DPO in the Llama-3-8B full fine-tuning settings, and its MT-Bench win rate varies across annotators. Since MT-Bench score averages absolute ratings while win rate counts pairwise victories, improvements on pairwise instruction-following benchmarks may not always translate directly to MT-Bench scores; we leave this trade-off for future investigation.


\section{Related Work}
\label{sec:related}
\paragraph{Reinforcement learning from human feedback.}
RLHF has become the dominant paradigm for aligning large language models with human preferences \citep{ouyang2022training,stiennon2020learning,schulman2017ppo}. The standard pipeline first trains a reward model from human comparisons, then optimizes the policy via PPO. DPO \citep{dpo} simplifies this pipeline by reparameterizing the reward into a supervised loss over preference pairs, eliminating the need for a separate reward model. This formulation has inspired a family of variants that modify the optimization objective along different axes: removing the reference model in favor of length-normalized implicit rewards \citep{simpo}, replacing pairwise preferences with prospect-theoretic pointwise feedback \citep{kto}, adopting alternative divergence constraints \citep{ipo}, decomposing optimization to the reasoning-step level \citep{wang-etal-2025-explore, stepdpo}, or refining the optimization under autoregressive or small-margin settings \citep{adpo,mixdpo}. Orthogonally, the KL penalty coefficient $\beta$ has been made adaptive at the sample level---via per-pair perturbation \citep{epsilon-dpo}, batch-level adjustment based on data quality \citep{beta-dpo}, instance-level weighting of preference pairs \citep{wpo}, or adaptive batch-wise scheduling \citep{sams}---but these methods modulate regularization per sample rather than per token. Se-DPO extends adaptive regularization to token granularity by deriving per-token KL budgets from the model's own evolving signals.

\begin{table}[t]
\centering
\caption{Comparison of fine-grained preference optimization methods.}
\label{tab:comparison}
\small
\begin{tabular*}{\textwidth}{@{\extracolsep{\fill}}lcccc}
\toprule
Method & Granularity & Signal Source & Extra Models? & Extra Cost \\
\midrule
TGDPO & Token & Teacher model & Yes & Teacher training \\
TIS-DPO & Token & Contrastive LLMs & Yes & 2 extra LLMs \\
SparsePO & Token & Loss-driven masks & No & $O(T)$ mask params \\
OTPO & Token & OT on embeddings & No & OT solver \\
Q-RM & Token & Discriminative model & Yes & Full model training \\
\midrule
$\varepsilon$-DPO & Instance & Logit perturbation & No & Negligible \\
DICE & Sequence & Implicit reward & No & Multi-round training \\
\midrule
Se-DPO (ours) & Token & Implicit reward & No & Minimal \\
\bottomrule
\end{tabular*}
\end{table}

\paragraph{Token-level rewards for preference optimization.}
Standard DPO aggregates token-level log-ratios via uniform summation, yet human judgments often hinge on a small subset of tokens. A growing body of work addresses this mismatch by introducing token-level importance signals into the preference loss. Several approaches estimate token importance from external models such as pre-trained teachers, contrastive LLM pairs, or discriminative Q-function models \citep{tgdpo,tisdpo,treg,qrm}. Others avoid external models but introduce auxiliary mechanisms---learned sparse masks, optimal transport, per-token KL constraints, gradient attribution, or token-adaptive barriers \citep{sparsepo,otpo,tdpo,tidpo,tabpo}. Separately, DPO's implicit reward has been exploited at the sequence level for iterative self-alignment, token selection, PPO-based fine-tuning, process rewards, and token-level distillation \citep{dice,sepo,rto,prime,aligndistil}. More broadly, iterative and feedback-driven optimization has been studied in both language-model alignment and scientific discovery settings \citep{dice,prime,zhao2026asr,li2026autosupervision}. Learning from ranking or preference signals also connects to broader work on robust optimization under noisy or imbalanced supervision \citep{ipo,kto,yu2026rubric}. Across these directions, token importance is generally computed outside the training loop or extracted from a fixed checkpoint. Table~\ref{tab:comparison} summarizes this comparison. Se-DPO differs in that it tracks the implicit reward online as it evolves during training and calibrates it with reference entropy for noise robustness, without requiring any external models.

\section{Conclusion}

We have identified a previously overlooked temporal dimension of token-level preference optimization: the implicit reward learned by DPO evolves substantially during training, with only 56\% overlap between top-$|\hat r_t|$ tokens at early and late stages. This finding challenges the static importance assumption shared by existing token-level methods, which compute token-level importance signals prior to or outside the training loop. We formalized the notion of token credit as per-token KL budget allocation, and showed via a variance-minimization analysis that optimal credit should be proportional to implicit reward magnitude and inversely related to estimation noise. Since both quantities change as the model trains, effective credit must evolve with the model.

Based on this analysis, we proposed Se-DPO, which derives token credit online from two complementary signals---implicit reward magnitude and reference entropy---through a lightweight calibration network. Se-DPO requires no external models, no pre-trained teachers, no contrastive LLM pairs, and no complex solvers, adding only minimal computational overhead to standard DPO training. Experiments across three base models and two preference annotators demonstrate consistent improvements on AlpacaEval~2 and Arena-Hard, with Se-DPO achieving up to 50.6\% win rate on AlpacaEval~2 and 43.3\% on Arena-Hard, surpassing most baselines without external models.

Our analysis builds on two proxy relationships: $|\hat{r}_t|$ as a stand-in for the true reward magnitude and $H_{\text{ref},t}$ as an indicator of estimation noise. The two design choices in Se-DPO---the warmup phase and the calibration network---are specifically introduced to account for their approximate nature: the warmup phase ensures that $|\hat{r}_t|$ has stabilized before being used as credit, while the calibration network learns to handle the nonlinear interaction between the two signals rather than relying on a fixed functional form. Incorporating richer representations from the model's hidden states could potentially improve credit estimation. 

\bibliography{colm2026_conference}
\bibliographystyle{colm2026_conference}
 
\newpage
\appendix
\section{Proof of Proposition~\ref{prop:optimal-weight}}
\label{app:proof}

\paragraph{Part (a).}
Since $c_t = \beta/\beta_t > 0$, we restrict to non-negative credits. Under independent noise, $\text{Var}[\hat{\Delta}] = \sum_{t} c_t^2 \sigma_t^2$. We minimize this subject to $\sum_{t} c_t |r_t^*| = C$ with $c_t \geq 0$. The Lagrangian is
\[
\mathcal{L} = \sum_t c_t^2 \sigma_t^2 - \lambda\!\left(\sum_t c_t |r_t^*| - C\right).
\]
Setting $\partial \mathcal{L} / \partial c_t = 0$ yields $2 c_t \sigma_t^2 = \lambda |r_t^*|$, so
\[
c_t^* = \frac{\lambda\, |r_t^*|}{2\sigma_t^2} \;\geq\; 0,
\]
which is automatically non-negative. Since credits are normalized to unit mean, the proportionality constant cancels:
$c_t^* \propto |r_t^*| / \sigma_t^2$.

\paragraph{Part (b).}
For $m$-dependent noise, the full variance is
\[
\text{Var}[\hat{\Delta}] = \underbrace{\sum_t c_t^2 \sigma_t^2}_{V_{\text{diag}}} + \underbrace{\sum_{t}\sum_{\substack{t':\, 0 < |t-t'| \leq m}} c_t c_{t'} \text{Cov}[\epsilon_t, \epsilon_{t'}]}_{V_{\text{cross}}}.
\]
With bounded credits $c_t \in [c_{\min}, c_{\max}]$ and writing $\rho_{\max} = \max_{|t-t'| \leq m} |\text{Corr}[\epsilon_t, \epsilon_{t'}]|$, the cross term satisfies
\[
|V_{\text{cross}}| \leq 2m \cdot c_{\max}^2 \cdot \rho_{\max} \sum_t \sigma_t^2.
\]
Meanwhile $V_{\text{diag}} \geq c_{\min}^2 \sum_t \sigma_t^2$. Hence $|V_{\text{cross}}| / V_{\text{diag}} \leq 2m\, \rho_{\max}\, (c_{\max}/c_{\min})^2$. When $m \ll T$ and the credit ratio $c_{\max}/c_{\min}$ is moderate (as enforced by mean normalization), the cross term is a bounded perturbation and the solution from Part~(a) remains a valid first-order approximation.

\section{Additional Experimental Results}
\label{app:additional-results}

\subsection{Training Cost}

Se-DPO adds only a lightweight calibration network and reference-entropy
computation on top of the standard DPO forward pass. Table~\ref{tab:time_3b}
reports the wall-clock training time on Llama-3.2-3B-Instruct with ArmoRM
annotation.


\begin{table}[h]
\centering
\caption{Wall-clock training time on Llama-3.2-3B-Instruct with ArmoRM annotation.}
\label{tab:time_3b}
\small
\begin{tabular}{lcc}
\toprule
Method & Training Time & Relative Overhead \\
\midrule
DPO & 97.1 min & -- \\
Se-DPO & 103.7 min & +6.85\% \\
\bottomrule
\end{tabular}
\end{table}

The overhead comes from the lightweight calibration network and reference
entropy computation. Since both $|\hat{r}_t|$ and $H_{\mathrm{ref},t}$ are
derived from the standard DPO forward pass without additional model evaluations,
this cost is substantially lower than methods requiring teacher, contrastive, or
separate discriminative reward models.





\begin{table}[t]
\centering
\caption{Length analysis across Llama-3-8B-Instruct and Llama-3.2-3B-Instruct with ArmoRM annotation. AE2 LC is the length-controlled AlpacaEval~2 win rate.}
\label{tab:length}
\small
\begin{tabular*}{\textwidth}{@{\extracolsep{\fill}}llcccc}
\toprule
Model & Method & AE2 WR (\%) & AE2 LC (\%) & AE2 Avg. L & AH Avg. L \\
\midrule
Llama-3-8B-Instruct & DPO & 40.8 & 38.2 & 1837 & 621 \\
Llama-3-8B-Instruct & SimPO & 37.0 & 41.2 & 1825 & 583 \\
Llama-3-8B-Instruct & TGDPO & 42.5 & 40.1 & 1882 & 638 \\
Llama-3-8B-Instruct & Se-DPO & 50.6 & 47.4 & 2215 & 661 \\
\midrule
Llama-3.2-3B-Instruct & Se-DPO LoRA & 34.2 & 32.6 & 2005 & 414 \\
\bottomrule
\end{tabular*}
\end{table}

\subsection{MT-Bench Category Breakdown}

Table~\ref{tab:mtbench-breakdown} reports the category-level MT-Bench scores for
Se-DPO on Llama-3-8B-Instruct with PairRM annotation. Se-DPO performs strongly
on open-ended generation categories, while the lower overall mean is driven by
Reasoning, Coding, and Math. This explains why MT-Bench score and MT-Bench win
rate can diverge: the score averages absolute ratings, whereas win rate counts
pairwise victories across questions.

\begin{table}[h]
\centering
\caption{MT-Bench category breakdown for Se-DPO on Llama-3-8B-Instruct with PairRM annotation.}
\label{tab:mtbench-breakdown}
\small
\begin{tabular}{lc}
\toprule
Category & Score \\
\midrule
Writing & 9.62 \\
Roleplay & 9.40 \\
STEM & 9.47 \\
Humanities & 9.57 \\
Extraction & 8.10 \\
Reasoning & 4.50 \\
Coding & 4.60 \\
Math & 3.90 \\
\bottomrule
\end{tabular}
\end{table}

\subsection{Token-Credit Behavior}

We examine representative evaluation samples to understand the behavior of the
learned credit. The resulting per-token KL budgets are context-dependent: in
code-related responses, instruction tokens and variable names tend to receive larger credit
than comment markers and section labels; in creative writing, character names
and narrative-driving tokens tend to receive larger credit; in poetry, line
breaks can receive elevated credit because they carry structural information.
Quantitatively, the learned credit distribution is highly non-uniform:
top-credited positions account for a larger share of the implicit-reward
contribution and receive correspondingly larger KL budgets, while low-credit
positions contribute less to the preference signal and are kept under tighter
regularization. This supports the view that the
calibration network learns to allocate token-level KL budgets according to
informative implicit-reward patterns rather than applying uniform token-level
regularization.

\section{Additional Experimental Details}
\label{app:details}

\begin{table}[t]
\centering
\caption{Preference-data construction and annotation protocol.}
\label{tab:data_protocol}
\small
\begin{tabular*}{\textwidth}{@{\extracolsep{\fill}}ll}
\toprule
Component & Setting \\
\midrule
Prompt source & UltraFeedback prompts \\
Candidate responses & 5 responses per prompt generated by the target instruction-tuned model \\
Sampling temperature & 0.8 \\
Preference annotator & ArmoRM; PairRM additionally for Llama-3-8B-Instruct \\
Chosen / rejected pair & Highest-ranked response vs. lowest-ranked response \\
Training size & Approximately 60K preference pairs per setting \\
\bottomrule
\end{tabular*}
\end{table}

\paragraph{Preliminary analysis (Section~\ref{sec:motivation}).} The token importance evolution analysis in Section~\ref{sec:motivation} is conducted on Qwen2.5-1.5B-Instruct with $\beta=0.1$, learning rate $10^{-5}$, trained for 1 epoch on 2{,}000 samples from UltraFeedback. We track 200 diagnostic samples across 15 evenly spaced checkpoints. The Se-DPO LoRA credit-dynamics probe in Figure~\ref{fig:credit_dynamics}(c--f) is conducted separately on fixed evaluation samples and is used only as a mechanism analysis of learned credit dynamics. The noise autocorrelation in Figure~\ref{fig:validation}(a) is computed at three training stages (7\%, 56\%, 99\% progress). The empirical reward variance $\hat{\sigma}_t^2$ in Figure~\ref{fig:validation}(b--c) is estimated as the variance of $\hat{r}_t$ across the latter half of training checkpoints.

\paragraph{Credit calibration network.} The calibration network $f_\phi$ is a two-layer MLP: Linear(2, 16) $\to$ ReLU $\to$ Linear(16, 1) $\to$ Softplus. The Softplus activation ensures non-negative output. The learning rate for $\phi$ is set to $10^{-3}$, higher than the policy learning rate, to enable fast adaptation to the evolving implicit reward signal.

\paragraph{Training hyperparameters.} All methods are trained for 1 epoch with the AdamW optimizer and a cosine learning rate schedule with 10\% warmup ratio. For Llama-3-8B-Instruct, we use a learning rate of $5 \times 10^{-7}$, $\beta = 0.01$. For Llama-3.2-3B-Instruct and Gemma-2-2B-it, Se-DPO is trained with LoRA (rank 64, $\alpha = 128$, applied to all linear layers), with a learning rate of $5 \times 10^{-6}$ and $\beta = 0.1$. The Se-DPO credit warmup $W$ is set to approximately 4\% of total training steps in the main experiments. DPO, SimPO, and TGDPO follow the hyperparameters reported in \citet{tgdpo}; additional token-level baselines use their original method-specific settings whenever applicable.

\end{document}